\documentclass{article}

\PassOptionsToPackage{numbers,sort&compress}{natbib}
\usepackage[preprint]{neurips_2026}
\workshoptitle{Evaluation of Interactive Agents}

\usepackage[utf8]{inputenc}
\usepackage[T1]{fontenc}
\usepackage{hyperref}
\hypersetup{hidelinks,hypertexnames=false}
\usepackage{url}
\usepackage{booktabs}
\usepackage{graphicx}
\usepackage{amsmath,amssymb,amsfonts,amsthm}
\usepackage{microtype}
\usepackage{xcolor}
\usepackage{float}
\usepackage{algorithm}
\usepackage[noend]{algpseudocode}

\newtheorem{theorem}{Theorem}
\newtheorem{lemma}{Lemma}

\newcommand{\E}{\mathbb{E}}
\newcommand{\Pp}{\mathbb{P}}
\newcommand{\Acal}{\mathcal{A}}
\newcommand{\Dcal}{\mathcal{D}}
\newcommand{\Ical}{\mathcal{I}}
\newcommand{\Jcal}{\mathcal{J}}

\newcommand{\KMS}{\operatorname{KMS}}
\newcommand{\Wass}{\operatorname{W}}
\newcommand{\SASST}{\textsc{SASST}}
\newcommand{\ESS}{\operatorname{ESS}}

\algrenewcommand\algorithmicrequire{\textbf{Input:}}
\algrenewcommand\algorithmicensure{\textbf{Output:}}
\title{Selection-Aware Stress Testing for Interactive Agents}
\author{%
  Yang Xu\thanks{Equal contribution.} \\
  Purdue University\\
  \texttt{xu1720@purdue.edu} \\
  \And
  Chenang Li\footnotemark[1] \\
  University of California, Irvine\\
  \texttt{chenangl@uci.edu} \\
  \And
  Jiefu Zhang \\
  Purdue University\\
  \texttt{zhan4018@purdue.edu} \\
  \And
  Haixiang Sun\\
  Purdue University\\
  \texttt{sun1321@purdue.edu} \\
  \And
  Zhou Li\\
  University of California, Irvine\\
  \texttt{zhou.li@uci.edu} \\
  \And
  Vaneet Aggarwal \\
  Purdue University \\
  \texttt{vaneet@purdue.edu}
}

\begin{document}
\maketitle

\begin{abstract}
Agent evaluations often use one benchmark to choose a workflow and then search for task types where its advantage weakens, so both conclusions are
selected from the same data. We introduce Selection-Aware Semantic Stress Testing (\SASST{}), which learns a task reweighting from pre-execution features on discovery tasks and evaluates the same paired comparison on separate confirmation tasks. The protocol checks support and stability, uses joint bounds for all planned claims, and can return no claim. We prove conditional asymptotic validity under stated cluster assumptions. A forty-cluster audit finds Gaussian undercoverage and conservative Bonferroni $t$ bounds. In one 480-episode $\tau$-bench study, a 3.75 point discovery gain vanished on confirmation. A second-model study likewise confirmed neither a workflow benefit nor a stable stress rule.
\end{abstract}

\section{Introduction}

Interactive-agent evaluations increasingly inform workflow selection \citep{anthropic2026agentevals,nist2026agentstandards}. A team may compare workflows that interleave reasoning and acting, add reflection or search-based planning, or compile retrieved skills \citep{yao2023react,shinn2023reflexion,zhou2024language,meng2026skillrae},
then select the strongest on a heterogeneous benchmark. However, that winner may lose
its advantage on a coherent mixture of subset tasks. If confirmed, such a reversal could support reconsidering the workflow choice for that task regime. Teams may therefore search many task types for where the selected workflow
loses its overall advantage. Because the subgroup is chosen after inspecting these same outcomes, an extreme reversal may reflect sampling noise rather than a pattern that persists on new tasks, making independent confirmation necessary before changing the deployment decision. In existing works, \citep{xu2026siren} protects workflow comparison,
\citep{eyuboglu2022domino,sohoni2020george,deon2022spotlight,yang2023subpopbench} find coherent error groups, \citep{cherian2024fairness} protects fixed group families, and \citep{repantis2026janus} confirms frozen descriptors for a fixed model. These methods do not by themselves protect the full process of choosing an average winner and then searching for task types where its advantage weakens or reverses. We propose Selection-Aware Semantic Stress Testing (SASST). It keeps repeated
runs of each task together and divides tasks into two sets: a discovery set for choosing a reweighting rule and a confirmation set for evaluating that frozen rule on tasks not used to choose it. Discovery-set workflow scores are used to choose among candidate rules based only on pre-execution attributes, such as goal length, required tools, or permission flags. Before any results from the confirmation set are inspected, the chosen rule is fixed. It then assigns weights to those tasks, and the same workflows are compared again. \SASST{} therefore protects two data-dependent choices: the workflow winner and the rule that most weakens its advantage. It rejects rules supported by too few tasks or unstable under small changes to the discovery data, and may
return no claim.

\textbf{Contributions.} \SASST{} (i) asks whether a workflow that wins on
average still wins when a coherent task type receives more emphasis, with all
workflows evaluated on the same task mix; (ii) learns from discovery results a
reusable rule based on task attributes known before execution, rejecting rules
that rely on too few tasks or exceed a declared shift; (iii) freezes the rule
and retests the same comparison on separate confirmation tasks, with
uncertainty bounds for all planned claims; and (iv) returns confirmed, not
confirmed, failed discovery, failed feasibility, or abstained, distinguishing
supported, inconclusive, infeasible, and unstable findings. Theory, controlled studies, and agent studies evaluate the procedure.

\section{Method}

\paragraph{Tasks and the fixed comparison.}
Let $W\sim P$ denote one benchmark task before execution. It contains the goal, initial state, available tools, permissions, and other fixed task information. Scores produced by running workflows under different simulator seeds are observed outcomes, not being a part of $W$. All outcomes from the same task remain together as one task-level cluster. Before choosing a task-weighting rule, we fix the comparison plan denoted as $\Acal$ for both the discovery and the confirmation. Given the observed workflow scores and a set of task weights, $\Acal$ returns one performance estimate for each workflow. It also specifies how repeated scores are combined and, when applicable, how workflows are selected or weighted. Let $\boldsymbol{\theta}(Q;\Acal)$ be these estimates when $Q$ provides the task weights, and let $c$ specify the comparison target of interest. We define $T_c(Q;\Acal)=c^\top\boldsymbol{\theta}(Q;\Acal)$. For example, when $c$ aims at comparing Plan + Verifier with ReAct \citep{yao2023react}, $T_c(Q;\Acal)$ is their difference in weighted average success under the same task weights. Throughout the setting, \SASST{} aims at changing task emphasis, not the workflows or the comparison plan. The stress target defined next asks whether this same comparison weakens when coherent task types receive more weight.

\paragraph{Stress target.}
A rule $g$ uses only pre-execution task features to assign a bounded
nonnegative score $s_g(W)$. Larger scores give a task more emphasis.
With $P s_g=\E_P[s_g(W)]$, define
\begin{equation}
  \omega_g(W)=\frac{s_g(W)}{P s_g},
  \qquad dP^g=\omega_g\,dP,
  \qquad \Delta(g)=T_c(P^g;\Acal).
  \label{eq:target}
\end{equation}
Here $P^g$ is the benchmark distribution reweighted by $g$, and $\Delta(g)$ is the workflow contrast under that emphasis. It covers represented task types and allowed shifts, not absent tasks or arbitrary deployment traffic. We next explain how $g$ is chosen on discovery data and evaluated independently.

\paragraph{Discovery and confirmation.}
On discovery clusters $\Dcal$, the equal-weight comparison gives
$\widehat\Delta_{\Dcal}$. The contribution $\widehat\psi_i$
approximates how changing task $i$'s weight changes the final
comparison, including any workflow-selection step inside $\Acal$.
With
$w_i(g;\Dcal)=s_g(W_i)/\sum_{\ell\in\Dcal}s_g(W_\ell)$,
rank candidate rules by
\begin{equation}
  \widehat\Delta^{\mathrm{loc}}_{\Dcal}(g)
  =\widehat\Delta_{\Dcal}
  +\sum_{i\in\Dcal}
  \left\{w_i(g;\Dcal)-\frac{1}{|\Dcal|}\right\}\widehat\psi_i.
  \label{eq:local}
\end{equation}
Lower values predict a weaker contrast. Discovery outcomes choose among candidate functions, but every candidate uses only pre-execution features. The selected function is frozen before confirmation. Equation~\eqref{eq:local} ranks rules only during discovery. Confirmation later recomputes the exact contrast.

Density and effective sample size ($\ESS$) checks prevent a few tasks from dominating. A radius limits the distance between the original and reweighted distributions of fixed task features. We use a finite-witness Kernel Max-Sliced Wasserstein (KMS) approximation \citep{wang2025kms}, with maximum mean discrepancy (MMD) and linear max-sliced alternatives \citep{gretton2012kernel}. Resampling the discovery clusters checks rule stability. On confirmation clusters $\Ical$, \SASST{} applies the frozen rule, repeats the support checks, recomputes the exact paired comparison, and forms a joint uncertainty bound over all planned claims. Each planned claim fixes its contrast, direction, candidate rule family, allowed radius, thresholds, and inference rule. Algorithm~\ref{alg:sasst} combines these choices, support checks, and
confirmation into one decision.

\begin{algorithm}[H]
\caption{\SASST{} discovery and confirmation.}
\label{alg:sasst}
\footnotesize
\begin{algorithmic}[1]
\Require Fixed comparison plan $\Acal$, independent task clusters,
pre-execution features, planned claims $\Jcal_0$.
\State Split whole clusters into discovery $\Dcal$ and confirmation
$\Ical$. Freeze workflows, candidate rule families, thresholds, and
the inference rule.
\State On $\Dcal$, select for each claim the feasible rule minimizing
Equation~\eqref{eq:local}. If none is feasible, record
\emph{failed discovery}.
\State Freeze the rule and estimate its stability using only $\Dcal$.
Apply it unchanged to $\Ical$. If support fails, record
\emph{failed feasibility}.
\State For every support-feasible rule, compute the exact weighted
comparison on $\Ical$ and its joint uncertainty bound.
\State Record \emph{confirmed} or \emph{not confirmed} when stability
passes. Otherwise report the estimate diagnostically and record
\emph{abstained}.
\Ensure A complete status and diagnostic report for every planned
claim.
\end{algorithmic}
\end{algorithm}
The algorithm above defines the operational report. The theorem below states when its confirmation bounds are jointly valid.

\begin{theorem}[Validity after discovery]
\label{thm:validity}
Condition on the discovery data $\mathcal F_{\Dcal}$. For a finite
predeclared family, suppose the confirmation cluster vectors
$H_k=(H_{k,m}:m\in\Jcal_0)$ are conditionally independent and
identically distributed. Let
$\widehat\Delta_m=F_m(N^{-1}\sum_{k=1}^N H_{k,m})$,
$\mu_m=\E(H_{1,m}\mid\mathcal F_{\Dcal})$, and
$\Delta_m=F_m(\mu_m)$, where $H_{k,m}$ contains every weighted
numerator and denominator, every selector input, and both sides of the
paired contrast. Assume bounded density ratios, denominators bounded
away from zero, continuously differentiable $F_m$ near $\mu_m$, finite
cluster second moments, a finite-dimensional Lindeberg condition,
conditional covariance converging to $\Sigma$, and positive limiting
variances for reported coordinates. Then
$\sqrt N(\widehat\Delta_m-\Delta_m:m\in\Jcal_0)$ is asymptotically
Gaussian, and the cluster Gaussian multiplier consistently
approximates its joint law. Pointwise intervals and signed
simultaneous bands have asymptotic conditional coverage, including for
a coordinate highlighted after inspecting the protected family.
\end{theorem}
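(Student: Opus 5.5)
The plan is to work conditionally on $\mathcal F_{\Dcal}$ throughout. Once the discovery data are fixed, the selected rules, thresholds and the family $\Jcal_0$ are deterministic, so the confirmation cluster vectors $H_k$ form a triangular array of conditionally i.i.d.\ vectors. The first step is a conditional multivariate central limit theorem for the stacked mean $\bar H=N^{-1}\sum_k H_k$. By the Cramér--Wold device it suffices to treat $a^\top(H_k-\mu)$ for a fixed vector $a$. The finite-dimensional Lindeberg condition together with $\operatorname{Cov}(H_1\mid\mathcal F_{\Dcal})\to\Sigma$ gives, through the Lindeberg--Feller theorem, $\sqrt N(\bar H-\mu)\rightsquigarrow N(0,\Sigma)$ conditionally. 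Finite second moments are what make the variance term well defined.

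The second step is the delta method. Bounded density ratios and denominators bounded away from zero ensure that every weighted ratio in $H_{k,m}$ is a smooth function of numerators and denominators near $\mu_m$. Continuous differentiability of $F_m$ then gives
\[
\sqrt N(\widehat\Delta_m-\Delta_m)=\nabla F_m(\mu_m)^\top\sqrt N(\bar H_m-\mu_m)+o_P(1).
\]
Stacking these over $m\in\Jcal_0$ yields the joint limit $N(0,G\Sigma G^\top)$, where $G$ is block-diagonal with blocks $\nabla F_m(\mu_m)^\top$. The selector step inside $\Acal$ is included because its inputs are coordinates of $H$ and $F_m$ is assumed differentiable near $\mu_m$. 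Implicitly this assumes no ties at the limit, which the smoothness hypothesis encodes.

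The third step, which I expect to be the main obstacle, is consistency of the cluster Gaussian multiplier. For this I would write the multiplier statistic as $N^{-1/2}\sum_k \xi_k \widehat G(H_k-\bar H)$ with i.i.d.\ standard normal $\xi_k$. Conditionally on the data, this is exactly Gaussian with covariance $\widehat G\widehat\Sigma\widehat G^\top$. It therefore suffices to show $\widehat\Sigma\to\Sigma$ and $\widehat G\to G$ in probability. The first follows from a weak law for the second moments of the triangular array. Uniform integrability of $\|H_k\|^2$ is the natural sufficient condition, and the Lindeberg condition supplies exactly the truncation control needed. The second follows from continuity of $\nabla F_m$ and $\bar H\to\mu$.

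Finally, the positive limiting variances for the reported coordinates make the studentized maximum continuously distributed. Its quantiles are then continuous functionals of the covariance, and Pólya-type uniform convergence of distribution functions gives asymptotic conditional coverage for both the pointwise intervals and the signed simultaneous bands. Coverage for a coordinate highlighted after inspection follows because the simultaneous event contains the event for any data-chosen index. Integrating over $\mathcal F_{\Dcal}$ is not needed, since the statement is conditional.
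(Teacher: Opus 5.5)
Your proposal is correct and follows essentially the paper's route. It uses a conditional Lindeberg--Feller/Cram\'er--Wold CLT, a Taylor linearization through $\nabla F_m(\mu_m)$, an exactly Gaussian cluster multiplier with consistent covariance, continuity of the studentized maximum under positive variances, and the observation that the simultaneous event covers any coordinate highlighted post hoc. The only difference is bookkeeping: you take $\Sigma$ as the limit covariance of $H_k$ and prove bootstrap consistency by factoring $\widehat G\widehat\Sigma\widehat G^\top$, while the paper takes $\Sigma$ as the limit covariance of the linearized contributions $\Phi_k$ and shows $\max_m N^{-1}\sum_k(\widehat\Phi_{k,m}-\Phi_{k,m})^2\to 0$ plus Cauchy--Schwarz, which is the same multiplier statistic since $\widehat\Phi_k=\widehat G(H_k-\bar H)$.
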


In plain language, once discovery fixes the rules, confirmation
supports joint uncertainty statements for all planned claims.
Appendix~\ref{app:proofs} gives the proof
\citep{chernozhukov2013gaussian}. The theorem assumes identically
distributed confirmation clusters, so the stratified designs in
Studies F and G require an extension not supplied here. With hard
workflow selection, the protected objects are the candidate
contrasts, not the argmax. Section~3 audits the forty-cluster regime.
\section{Validation}

We test whether confirmation preserves coverage after rule discovery, what fails with forty clusters, and whether a supported signal remains detectable. For $M\ge1000$, pointwise coverage is 94.75--95.30\% and six-slot simultaneous coverage is 94.95--95.50\%. At $M=1000$, same-data worst-tercile coverage is 91.6\%, versus 93.6\% with discovery and confirmation separated
($p=0.032$).

At $N=40$, an initial audit found 92.2--92.8\% Gaussian coverage; a later fresh-seed diagnostic found 90.55--93.50\% Gaussian and 96.40--97.55\%
Bonferroni-$t$ coverage across four null settings. With 200 clusters and adequate stressed $\ESS$, a planted-fragility control selects the target rule in 92.1\% of trials, confirms it conditionally in 70.1\%, detects it end-to-end in 64.6\%, and has 1.7--1.8\% null familywise error.

\begin{table}[H]
\centering
\small
\setlength{\tabcolsep}{3.4pt}
\caption{Inferential checks and the small-cluster safeguard.}
\label{tab:validation}
\begin{tabular}{lcc}
\toprule
Setting & Result & Reading \\
\midrule
Pointwise, $M\ge1000$ & 94.75--95.30\% & near 95\% \\
Six-slot simultaneous & 94.95 / 95.50\% & near 95\% \\
Ten-slot Gaussian, $N=40$ & 92.2--92.8\% & undercoverage \\
Ten-slot Bonferroni $t$, $N=40$ & 96.4--97.55\% & conservative \\
Positive control, $N=200$ & 64.6\% end-to-end & signal found \\
\bottomrule
\end{tabular}
\end{table}

Additional audits test the rule-search safeguards: no geometry dominates, a declared radius can exclude a planted rule, support checks reject concentrated
rules, and a detectable real-feature effect is reported as abstained when its description is unstable.

\section{Agent studies}

We apply the full protocol to two models. ReAct \citep{yao2023react} is the baseline tool-use loop; Plan + Verifier adds planning and model-based verification; Confirmation Gate requests approval before declared irreversible actions. For each model, two slots test average success against ReAct, six test the same comparisons under three allowed task shifts, and two compare the enhanced workflows. Each study uses the same 80 $\tau$-bench tasks \citep{yao2024taubench}, two simulator
seeds, and three workflows (480 episodes). Task ID is the independent cluster, whole tasks are split 40/40 for discovery and confirmation, and rules below the
predeclared 0.60 stability threshold are abstained.

\begin{table}[H]
\centering
\small
\setlength{\tabcolsep}{3.2pt}
\caption{Across-model results. Confirmation is the discovery-selected workflow minus ReAct; counts are confirmed / not confirmed / abstained / failed feasibility.}
\label{tab:agents}
\begin{tabular}{lccc}
\toprule
Model & Discovery result & Confirmation & Final counts \\
\midrule
Qwen3-8B & Plan+Verifier ($+3.75$ pp) & $0.0$ pp & 0 / 2 / 8 / 0 \\
Qwen2.5-7B & Confirmation Gate selected & $+2.5$ pp & 0 / 2 / 6 / 2 \\
\bottomrule
\end{tabular}
\end{table}

The Qwen3 discovery advantage disappeared on confirmation. Qwen2.5 had lower overall success, selected a different workflow, and produced different stress rules. Maximum rule stability was only 0.37 and 0.27, respectively, so neither model yielded a reusable stress description or confirmed workflow benefit.

These studies do not establish equivalence or safety. Forty confirmation clusters and stressed $\ESS$ near 20 provide little power for effects of practical size, many tasks fail under every workflow, and the Qwen3 kernel bandwidth made most task embeddings nearly orthogonal. The second study reuses the same task pool, so it is a model replication rather than an independent benchmark replication.

\section{Decision interpretation and scope}

\begin{table}[H]
\centering
\small
\setlength{\tabcolsep}{4pt}
\caption{Decision status returned for every predeclared slot.}
\label{tab:status}
\begin{tabular}{lp{0.72\linewidth}}
\toprule
Status & Meaning \\
\midrule
Confirmed & Support and stability pass, and the joint uncertainty bound supports the planned claim. \\
Not confirmed & Checks pass, but the bound does not support the claim; this is not equivalence. \\
Failed discovery & No candidate rule satisfies the discovery constraints. \\ Failed feasibility & The selected task weighting lacks confirmation support
or violates a predeclared check. \\ Abstained & The effect can be estimated, but the rule is too unstable to reuse. \\
\bottomrule
\end{tabular}
\end{table}

The positive control shows that the protocol can confirm a planted fragility when the rule is eligible and confirmation has adequate support. The Qwen3
study shows the opposite case: a $+3.75$ pp discovery gain and an apparently weak segment could have motivated a verifier, but the average gain vanishes on
confirmation and the segment description is unstable. The protected report supports neither claim and records what is missing rather than searching again
on the same confirmation-task results.

\paragraph{Reported evidence.}
For every planned claim, \SASST{} reports the frozen rule, estimated comparison, uncertainty bound, support, stability, geometry diagnostics, and
final status. These fields explain a missing claim: low support calls for more independent task clusters; instability calls for better features or rule geometry; and a non-confirmed bound calls for more power or a redesigned intervention.

\paragraph{Limitations.}
The theorem requires conditionally i.i.d. confirmation clusters, bounded task weights, stable denominators, honest discovery, and smooth comparison functions. The Bonferroni $t$ safeguard is supported by four audited null settings rather than a universal finite-sample theorem. Empirically, the studies use one $\tau$-bench task pool, low-success open models, the same model as actor and user simulator, and forty confirmation tasks; the second model is not an independent benchmark replication. The next protected study needs fresh task clusters, score-blind pilot calibration of geometry and radius, an outcome aligned with the intended policy or state transition, an independent checker where relevant, and a power plan based on stressed $\ESS$; more seeds on the same tasks do not replace independent clusters.

\paragraph{Operational scope.}
\SASST{} supports decisions only for the benchmark and allowed task shifts; it is not a deployment-security certificate. It distinguishes supported, underpowered, out-of-scope, and unstable claims before a workflow or control is released.

\clearpage
\bibliographystyle{plainnat}
\bibliography{main}

\clearpage
\appendix
\section{Related work and industry context}
\label{app:related}

Agent benchmarks now test tool use, persistent state, web work, and general task completion. Examples include $\tau$-bench, ToolSandbox, WorkArena, AgentBench, GAIA, and TaskBench \citep{yao2024taubench,lu2024toolsandbox,drouin2024workarena,liu2024agentbench,mialon2024gaia,shen2024taskbench}. The benchmarks make workflow evaluation concrete, but they do not provide after discovery confidence bounds for a task mixture selected after workflow search.

Several methods discover weak groups or protect group comparisons. GEORGE, Domino, Spotlight, and SubpopBench study hidden groups, systematic errors, and subpopulation shift \citep{sohoni2020george,eyuboglu2022domino,deon2022spotlight,yang2023subpopbench}. Fairness auditing gives simultaneous inference over rich group families \citep{cherian2024fairness}, while Janus uses decoys and held out replication for a frozen descriptor list and a fixed model \citep{repantis2026janus}. Distributionally robust optimization trains models for worst group or nearby distribution performance \citep{duchi2021uniform,sagawa2020groupdro}. SASST instead audits a frozen adaptive report. It discovers a common task reweighting, recomputes the complete paired report on independent clusters, and allows non-confirmation or abstention.

KMS and MMD provide two choices of geometry for comparing feature distributions \citep{wang2025kms,gretton2012kernel}. The geometry limits the task mixtures considered by discovery. Coverage comes from the frozen rule cluster inference, which follows finite dimensional multiplier bootstrap arguments \citep{chernozhukov2013gaussian}. Industry guidance also treats agent evaluation as a release and governance process. Anthropic discusses the difficulty of evaluating agents that use tools and change state, Microsoft provides repeated evaluation for business agents, and NIST is developing agent security evaluations and standards \citep{anthropic2026agentevals,nist2026agentstandards}.

\section{Proofs and technical details}
\label{app:proofs}

\subsection{Normalized weighted-ratio expansion}

\begin{lemma}[Weighted-ratio expansion]
\label{lem:ratio}
Let $a_g=P s_g>0$, $P^g h=P(s_gh)/a_g$, and $P_n^g h=P_n(s_gh)/P_ns_g$. Whenever $P_ns_g>0$,
\begin{equation}
 P_n^g h-P^g h
 =\frac{a_g}{P_ns_g}(P_n-P)
 \left\{\omega_g(h-P^gh)\right\},
 \qquad \omega_g=s_g/a_g.
 \label{eq:ratioexact}
\end{equation}
If $P\{\omega_g^2(h-P^gh)^2\}<\infty$ and $P_ns_g\to_p a_g$, then
\begin{equation}
 \sqrt n(P_n^g h-P^g h)
 =\frac{1}{\sqrt n}\sum_{i=1}^n
 \omega_g(W_i)\{h(W_i)-P^gh\}+o_p(1).
 \label{eq:ratiolin}
\end{equation}
The same statement holds coordinatewise for a fixed-dimensional vector $h$.
\end{lemma}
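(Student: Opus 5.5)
The exact identity \eqref{eq:ratioexact} follows by direct algebra, and the linearization \eqref{eq:ratiolin} then follows from a central limit theorem plus Slutsky's lemma.

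\emph{Step 1: the exact identity.} Write
\[
P_n^g h-P^g h=\frac{P_n(s_g h)-P^g h\,P_n s_g}{P_n s_g}=\frac{P_n\{s_g(h-P^g h)\}}{P_n s_g}.
\]
The population counterpart of the numerator vanishes:
\[
P\{s_g(h-P^g h)\}=P(s_g h)-a_g P^g h=0 .
\]
So the numerator equals $(P_n-P)\{s_g(h-P^gh)\}$. Since $s_g=a_g\omega_g$, we may pull out $a_g$. This gives \eqref{eq:ratioexact} whenever $P_n s_g>0$, with no asymptotics involved.

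\emph{Step 2: the linearization.} Define
\[
Z_n=\sqrt n\,(P_n-P)\{\omega_g(h-P^gh)\}=n^{-1/2}\sum_{i=1}^n \omega_g(W_i)\{h(W_i)-P^gh\}.
\]
The summands are i.i.d. with mean zero, and by assumption their variance $P\{\omega_g^2(h-P^gh)^2\}$ is finite. The Lindeberg--L\'evy CLT therefore gives $Z_n=O_p(1)$.

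By \eqref{eq:ratioexact},
\[
\sqrt n(P_n^gh-P^gh)=\frac{a_g}{P_ns_g}Z_n=Z_n+\Bigl(\frac{a_g}{P_ns_g}-1\Bigr)Z_n .
\]
Because $P_ns_g\to_p a_g>0$, the event $\{P_n s_g>0\}$ has probability tending to one, so the identity holds with probability tending to one. By the continuous mapping theorem, $a_g/P_ns_g-1=o_p(1)$. The product of an $o_p(1)$ term and an $O_p(1)$ term is $o_p(1)$, which yields \eqref{eq:ratiolin}.

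\emph{Step 3: vector $h$.} For a fixed-dimensional $h$, apply Steps 1 and 2 to each coordinate. The scalar factor $a_g/P_ns_g$ is common to all coordinates. Joint tightness of the coordinates follows from the coordinatewise CLTs, so the remainder is $o_p(1)$ as a vector.

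The only delicate point is the event $P_ns_g=0$, on which \eqref{eq:ratioexact} is not defined. This is handled by noting that it has vanishing probability, so the expansion holds with probability tending to one. That suffices for an $o_p(1)$ statement. Nothing beyond the stated second moment is needed, since the cluster structure is absorbed by treating each $W_i$ as a whole cluster.
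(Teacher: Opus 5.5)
Your proof is correct and follows essentially the same route as the paper: the same common-denominator algebra and mean-zero centering give the exact identity, and then an $O_p(1)$ bound on the centered sum combined with $a_g/P_ns_g=1+o_p(1)$ via Slutsky gives the linearization, applied coordinatewise for vector $h$. Your explicit treatment of the event $\{P_ns_g=0\}$ as having vanishing probability is a small addition that the paper leaves implicit.
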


\begin{proof}
Put the empirical and population tilted means over the common denominator $P_ns_g$:
\[
 P_n^gh-P^gh
 =\frac{P_n\{s_g(h-P^gh)\}}{P_ns_g}
 =\frac{a_g}{P_ns_g}P_n\{\omega_g(h-P^gh)\}.
\]
Because $P\{\omega_g(h-P^gh)\}=0$, this is exactly~\eqref{eq:ratioexact}. The centered empirical average is $O_p(n^{-1/2})$ under the stated second-moment condition, while $a_g/(P_ns_g)=1+o_p(1)$. Slutsky's theorem gives~\eqref{eq:ratiolin}. The vector statement follows coordinatewise because the dimension is fixed.
\end{proof}

\subsection{Cluster linearization and plug-in contributions}

\begin{lemma}[Weighted-report cluster linearization]
\label{lem:clusterlin}
Conditional on $\mathcal F_{\Dcal}$, suppose the conditions of Theorem~\ref{thm:validity} hold. Let $\mu_m=\E(H_{1,m}\mid\mathcal F_{\Dcal})$ and $\Phi_{k,m}=\nabla F_m(\mu_m)^\top(H_{k,m}-\mu_m)$. Uniformly over the finite family $\Jcal_0$,
\begin{equation}
 \sqrt N(\widehat\Delta_m-\Delta_m)
 =N^{-1/2}\sum_{k=1}^N\Phi_{k,m}+o_p(1),
 \label{eq:clusterlin}
\end{equation}
and the canonical plug-in contributions satisfy
\begin{equation}
 \max_{m\in\Jcal_0}N^{-1}\sum_{k=1}^N
 (\widehat\Phi_{k,m}-\Phi_{k,m})^2\xrightarrow{p}0.
 \label{eq:plugincons}
\end{equation}
\end{lemma}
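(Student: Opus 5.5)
Everything is done conditionally on $\mathcal F_{\Dcal}$, so the frozen rules, the weights and the smooth maps $F_m$ are fixed, and the cluster vectors $H_k$ are i.i.d.\ with mean $\mu=(\mu_m)_{m\in\Jcal_0}$. The argument is the delta method applied coordinatewise to the sample mean $\bar H_m=N^{-1}\sum_k H_{k,m}$. After that, a consistency argument handles the plug-in contributions. The family $\Jcal_0$ is finite, so a statement proved for each $m$ holds uniformly over $m$.

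\emph{Expansion \eqref{eq:clusterlin}.} First I show $\sqrt N(\bar H_m-\mu_m)=O_p(1)$. This follows from finite cluster second moments and Chebyshev's inequality applied conditionally. Lindeberg is not needed at this stage; it matters only for the Gaussian limit in Theorem~\ref{thm:validity}. Hence $\bar H_m\to_p\mu_m$. Since $F_m$ is $C^1$ on a neighbourhood $U_m$ of $\mu_m$, the event $\bar H_m\in U_m$ has probability tending to one. On that event the mean value theorem gives
\[
\sqrt N(\widehat\Delta_m-\Delta_m)=\nabla F_m(\tilde\mu_m)^\top\sqrt N(\bar H_m-\mu_m)
\]
for some $\tilde\mu_m$ on the segment between $\bar H_m$ and $\mu_m$. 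By continuity of $\nabla F_m$, $\nabla F_m(\tilde\mu_m)-\nabla F_m(\mu_m)=o_p(1)$. Multiplying this by the $O_p(1)$ term gives an $o_p(1)$ remainder, which is \eqref{eq:clusterlin}.

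Denominators bounded away from zero guarantee that the ratio components of $F_m$, such as the normalized weighted means of Lemma~\ref{lem:ratio}, are smooth near $\mu_m$. I would note that Lemma~\ref{lem:ratio} is the special case $F(x,y)=x/y$. Taking a maximum over finitely many $m$ preserves $o_p(1)$.

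\emph{Plug-in consistency \eqref{eq:plugincons}.} I take $\widehat\Phi_{k,m}=\nabla F_m(\bar H_m)^\top(H_{k,m}-\bar H_m)$. Then
\[
\widehat\Phi_{k,m}-\Phi_{k,m}=(\nabla F_m(\bar H_m)-\nabla F_m(\mu_m))^\top(H_{k,m}-\mu_m)-\nabla F_m(\bar H_m)^\top(\bar H_m-\mu_m).
\]
Using $(a+b)^2\le2a^2+2b^2$ and Cauchy--Schwarz, the average of squares is bounded by
\[
2\|\nabla F_m(\bar H_m)-\nabla F_m(\mu_m)\|^2\,N^{-1}\sum_k\|H_{k,m}-\mu_m\|^2+2\|\nabla F_m(\bar H_m)\|^2\|\bar H_m-\mu_m\|^2 .
\]
In the first term, the gradient difference is $o_p(1)$ and the average is $O_p(1)$ by Markov's inequality and finite second moments. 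In the second term, the gradient is $O_p(1)$ and $\|\bar H_m-\mu_m\|^2=O_p(N^{-1})$. Both terms are therefore $o_p(1)$, and finiteness of $\Jcal_0$ gives the maximum over $m$.

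The main obstacle I expect is conceptual rather than computational. The conditioning must not break the i.i.d.\ structure. In particular, "$O_p$" and "$o_p$" must be understood under the conditional law given $\mathcal F_{\Dcal}$, or along a sequence where that law varies with $N$. If the conditional law varies with $N$, then Chebyshev's inequality needs second moments that are uniformly bounded in $N$, and I would add that to the "finite cluster second moments" assumption. The gradient-continuity step also needs a $C^1$ neighbourhood that does not shrink with $N$. If the paper uses a different canonical $\widehat\Phi$, for example a jackknife or selector-aware derivative, I would instead show that it differs from the form above by $o_p(1)$ in empirical $L^2$.
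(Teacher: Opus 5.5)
Your proposal is correct and follows essentially the same route as the paper: a first-order Taylor expansion of $F_m$ at $\mu_m$ with an $O_p(N^{-1/2})\cdot o_p(1)$ remainder, followed by the same two-term decomposition of $\widehat\Phi_{k,m}-\Phi_{k,m}$ bounded via Cauchy--Schwarz. The differences are minor refinements, not departures: the paper uses the integral-form remainder with a supremum over the segment rather than a mean-value point, which sidesteps the measurability of $\tilde\mu_m$ and the need for a convex $U_m$; you correctly note that Chebyshev suffices in place of Lindeberg; and your non-shrinking $C^1$ neighbourhood is what the paper obtains from the fixed denominator floor $b_0$.
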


\begin{proof}
Every population weighted scoring and held out denominator is at least a fixed $b_0>0$. The cluster law of large numbers and finiteness of $\Jcal_0$ imply that, with probability tending to one, all empirical denominators are at least $b_0/2$. Hence every $F_m$ is evaluated inside a neighborhood where its gradient is bounded and uniformly continuous.

Finite second moments and the cluster Lindeberg condition give $\widehat\mu_m-\mu_m=O_p(N^{-1/2})$. Taylor's theorem with integral remainder yields
\begin{align*}
 F_m(\widehat\mu_m)-F_m(\mu_m)
 &=\nabla F_m(\mu_m)^\top(\widehat\mu_m-\mu_m)+R_{N,m},\\
 |R_{N,m}|
 &\le \|\widehat\mu_m-\mu_m\|
 \sup_{0\le t\le1}
 \|\nabla F_m(\mu_m+t(\widehat\mu_m-\mu_m))-
 \nabla F_m(\mu_m)\|.
\end{align*}
The first factor is $O_p(N^{-1/2})$ and the second is $o_p(1)$, uniformly over the finite family. The Taylor expansion proves~\eqref{eq:clusterlin}. The coordinates of $H_{k,m}$ include the weighted scoring and held out numerators and denominators, along with the selector and aggregation inputs. Differentiating $q(S)^\top T$ gives
\[
 d\{q(S)^\top T\}=q(S)^\top dT+\{Dq(S)dS\}^\top T,
\]
so $\Phi_{k,m}$ contains both held out evaluation and splitwise selection effects. Held out evaluation and splitwise selection are the two channels through which an evaluation unit changes an adaptive report.

For the plug-in result, write
\begin{align*}
 \widehat\Phi_{k,m}-\Phi_{k,m}
 &=\{\nabla F_m(\widehat\mu_m)-\nabla F_m(\mu_m)\}^\top(H_{k,m}-\mu_m)\\
 &\quad-\nabla F_m(\widehat\mu_m)^\top(\widehat\mu_m-\mu_m).
\end{align*}
The average squared first term is $o_p(1)$ by gradient continuity, Cauchy Schwarz, and bounded empirical second moments. The second term is common across $k$ and is $o_p(1)$ because the gradient is bounded and $\widehat\mu_m-\mu_m=o_p(1)$. Taking the maximum over the finite family proves~\eqref{eq:plugincons}.
\end{proof}

\subsection{Proof of Theorem~\ref{thm:validity}}

\begin{proof}
Condition throughout on $\mathcal F_{\Dcal}$. The discovered rules and their random targets are then fixed. Let
\[
 \Phi_k=(\Phi_{k,m}:m\in\Jcal_0)^\top,
 \qquad
 S_N=N^{-1/2}\sum_{k=1}^N\Phi_k.
\]
By Lemma~\ref{lem:clusterlin}, the centered estimator vector equals $S_N+o_p(1)$. For every fixed $t\in\mathbb R^{|\Jcal_0|}$, the scalar triangular array $N^{-1/2}\sum_k t^\top\Phi_k$ satisfies the Lindeberg Feller conditions and has variance converging to $t^\top\Sigma t$. The scalar CLT and Cram\'er--Wold device therefore give
\[
 S_N\xrightarrow{d}N_{|\Jcal_0|}(0,\Sigma).
\]
Slutsky's theorem proves the asserted joint limit for the estimator.

It remains to establish the bootstrap approximation. Let
\[
 \widehat\Sigma
 =N^{-1}\sum_{k=1}^N
 (\widehat\Phi_k-\overline{\widehat\Phi})
 (\widehat\Phi_k-\overline{\widehat\Phi})^\top.
\]
Let $D_k=\widehat\Phi_k-\Phi_k$. Lemma~\ref{lem:clusterlin} gives $N^{-1}\sum_k\|D_k\|^2=o_p(1)$. Cauchy Schwarz then shows that every cross term $N^{-1}\sum_kD_k\Phi_k^\top$ is $o_p(1)$, and the $D_kD_k^\top$ terms are also $o_p(1)$. The cluster law of large numbers and covariance assumption imply that the sample covariance of the true $\Phi_k$ converges to $\Sigma$. Hence
\[
 \widehat\Sigma\xrightarrow{p}\Sigma.
\]

Conditional on the inference data, the multiplier vector
\[
 G^*=N^{-1/2}\sum_{k=1}^N
 \zeta_k(\widehat\Phi_k-\overline{\widehat\Phi}),
 \qquad \zeta_k\stackrel{\mathrm{iid}}{\sim}N(0,1),
\]
is exactly Gaussian with mean zero and covariance $\widehat\Sigma$. Continuity of centered Gaussian laws in their covariance matrices therefore yields conditional weak convergence to $N(0,\Sigma)$ in bounded Lipschitz distance. Coordinatewise bootstrap quantiles give pointwise coverage.

For signed simultaneous claims, fix $\eta_m\in\{-1,+1\}$ and let $\sigma_m^2=\Sigma_{mm}$. On the reported nondegenerate coordinates assume $\min_m\sigma_m^2>0$. The map
\[
 x\longmapsto \max_{m\in\Jcal_0}\frac{\eta_m x_m}{\sigma_m}
\]
is continuous, and its finite dimensional nondegenerate Gaussian distribution is continuous. Conditional bootstrap quantiles of the corresponding studentized maximum are therefore consistent. The resulting lower bands $B_m$ for $\eta_m\Delta_m$ satisfy
\[
 \Pp\!\left(
 B_m\le \eta_m\Delta_m\ \text{for all }m\in\Jcal_0
 \mid\mathcal F_{\Dcal}
 \right)
 \ge 1-\alpha-o_p(1).
\]
On this event the inequality holds for every coordinate, so it also holds for any coordinate selected after inspecting the estimates and bands. The simultaneous event proves the pointwise, simultaneous, and selected report conclusions.
\end{proof}

\section{Expanded operational algorithm}
\label{app:algorithm}

The following details make Algorithm~\ref{alg:sasst} fully executable. The algorithm is common to static LLM evaluations and agentic workflows; only the evaluation-unit representation, cluster definition, score tensor, and allowed feature map change.

\subsection{Frozen inputs and reporting slots}

The inputs are: independent clusters $C_1,\ldots,C_N$; the completed reporting layer $\Acal$; a score blind feature map $\phi$; and a finite family $\Jcal_0$. A slot $m\in\Jcal_0$ fixes the contrast $c_m$, one sided sign $\eta_m\in\{-1,+1\}$, feature/kernel protocol, candidate family $\mathcal G_m$, KMS radius $\rho_m$, density cap $\kappa_m$, minimum effective sample size $n_{\min,m}$, denominator thresholds, stability threshold, optimization tolerance, and deterministic tie rule. A candidate must be a reusable object $g$ that maps any allowed feature vector to a nonnegative score $s_g(W)$; a transductive vector of discovery weights is not an admissible output.

Split whole clusters into $\Dcal$ and $\Ical$. All encoders, standardization statistics, random Fourier features, KMS witness directions, dictionaries, split schedules, and baseline systems are frozen according to the slot before confirmatory outcomes are read.

\subsection{Discovery on $\Dcal$}

For each contrast needed by a slot, run the unweighted fixed reporting layer on $\Dcal$ and obtain $\widehat\Delta_{\Dcal,c}$ and first-order contributions $\widehat\psi_{c,i}$. For every $g\in\mathcal G_m$, compute
\[
 w_i(g;\Dcal)=\frac{s_g(W_i)}{\sum_{\ell\in\Dcal}s_g(W_\ell)},\qquad
 \widehat\ESS_g=\left\{\sum_{i\in\Dcal}w_i(g;\Dcal)^2\right\}^{-1},\qquad
 \widehat\kappa_g=|\Dcal|\max_i w_i(g;\Dcal).
\]
Let $\widehat P_{\Dcal}$ be uniform on the discovery features and let $\widehat P_{\Dcal}^{g}$ use the weights above. With a frozen nonlinear feature map $z$ and finite witness set $\mathcal B_L$, the implemented radius is
\[
 \widehat{\KMS}_{1,L}(\widehat P_{\Dcal},\widehat P_{\Dcal}^{g})
 =\max_{\beta\in\mathcal B_L}
 \Wass_1\!\left((\beta^\top z)_{\#}\widehat P_{\Dcal},
                 (\beta^\top z)_{\#}\widehat P_{\Dcal}^{g}\right).
\]
A candidate is discovery-feasible when its density, $\ESS$, KMS, source/template, and other predeclared score blind constraints pass. Among feasible candidates, select
\[
 \widehat g_m\in\arg\min_{g\in\mathcal G_m}
 \left[
 \widehat\Delta_{\Dcal,c_m}
 +\sum_{i\in\Dcal}\left\{w_i(g;\Dcal)-|\Dcal|^{-1}\right\}
 \widehat\psi_{c_m,i}
 \right],
\]
up to the declared optimization tolerance and fixed tie rule. If no candidate is feasible, record failed discovery. Serialize every fitted threshold, basis, center, bandwidth, preprocessing statistic, and diagnostic needed to apply $\widehat g_m$ to new units.

Discovery stability is evaluated by resampling discovery clusters, rerunning the same slot, and comparing the resulting weight functions on a frozen reference feature set. Discrete rules may use exact agreement; continuous rules use a predeclared functional metric such as weight correlation and top-weighted-set overlap. Stability never uses inference outcomes.

\subsection{Frozen-rule confirmation on $\Ical$}

Apply $s_{\widehat g_m}$ to inference features. Recompute the score-blind density ratio, $\ESS$, source/template caps, and every scoring and held-out denominator. If any predeclared inference feasibility check fails, record failed feasibility and stop that slot. If support passes but the discovery description fails its predeclared stability criterion, set an abstention flag and continue so that the frozen-rule estimate and bound can be reported diagnostically.

For every support-feasible slot, let $S_r$ and $E_r$ be the fixed scoring and held-out subsets in repeated split $r$. For system $j$ and retained artifact $a$, compute the exact ratios
\[
 \widehat S^{g}_{r,j,a}
 =\frac{\sum_{i\in S_r}s_g(W_i)Z_j(W_i,a)}{\sum_{i\in S_r}s_g(W_i)},
 \qquad
 \widehat T^{g}_{r,j,a}
 =\frac{\sum_{i\in E_r}s_g(W_i)Z_j(W_i,a)}{\sum_{i\in E_r}s_g(W_i)}.
\]
Apply the frozen selector and aggregation without modification:
\[
 \widehat q^{g}_{r,j}=\pi_j(\widehat S^{g}_{r,j}),\qquad
 \widehat Y^{g}_{r,j}=(\widehat q^{g}_{r,j})^\top\widehat T^{g}_{r,j},\qquad
 \widehat\theta^{g}_{j}=\sum_r\lambda_r\widehat Y^{g}_{r,j},\qquad
 \widehat\Delta_m=c_m^\top\widehat\theta^{g}.
\]
Both sides of a paired comparison are included in this same contrast. Construct the cluster tensor $H_{k,m}$ from all weighted numerators and denominators, selector inputs, aggregation coordinates, and systems in the contrast, and compute
\[
 \widehat\Phi_{k,m}=\nabla F_m(\widehat\mu_m)^\top(H_{k,m}-\widehat\mu_m).
\]
The exact derivative contains both the held out evaluation term and the splitwise selection term displayed in Lemma~\ref{lem:clusterlin}.

\subsection{Inference rule and small cluster safeguard}

The Gaussian multiplier maximum is the default asymptotic procedure. The inference rule must be frozen before confirmation outcomes are read. When a score-blind pilot or an independent simulation places the study in a small-cluster regime that has not passed the multiplier calibration check, the protocol may instead use an independently audited conservative family. Our audited safeguard is the one-sided Bonferroni $t$ band
\[
 B_m^{\mathrm{Bonf}}
 =\eta_m\widehat\Delta_m
 -t_{1-\alpha/|\Jcal_0|,\,N-1}
 \frac{\widehat\sigma_m}{\sqrt N}.
\]
Theorem~\ref{thm:validity} does not imply this finite-sample safeguard. It is an operating rule supported by the small-cluster simulations in Appendix~\ref{app:experiments}. The Gaussian analysis was the original Study F analysis; Bonferroni $t$ was a later conservative sensitivity, and both give the same status for every Study F slot. Future small-cluster studies should freeze the safeguarded rule before confirmation outcomes are read.

\subsection{Simultaneous reporting and statuses}

Use one multiplier $\zeta_k$ per inference cluster and share it across all slots. For predeclared signs $\eta_m$, apply the multiplier maximum to the signed contributions $\eta_m\widehat\Phi_{k,m}$. A fragility claim uses $\eta_m=-1$, so a lower band for $-\Delta_m$ is an upper bound for $\Delta_m$. Report the complete attempted family, including nonreported slots:
\begin{center}
\footnotesize
\begin{tabular}{p{0.22\linewidth}p{0.70\linewidth}}
\toprule
Status & Meaning \\
\midrule
confirmed & Feasibility and stability pass, and the predeclared signed band supports the stated claim. \\
not confirmed & Feasibility and stability pass, but the signed band does not support the predeclared effect threshold. \\
failed discovery & No candidate in the slot satisfies the discovery constraints. \\
failed feasibility & The frozen rule lacks inference support, violates a cap, or has an unstable scoring or held out denominator. \\
abstained & The frozen rule effect may be estimable, but its semantic or workflow description fails the declared replicability or auditability criterion. \\
\bottomrule
\end{tabular}
\end{center}
The output row contains the frozen rule and description, ordinary and stressed contrasts, signed bound, $\ESS$, maximum density ratio, minimum split denominator, stability, geometry diagnostics, runtime, and audit/control recommendation. Any external retrieval, generation, human curation, or mitigation experiment is a separate generalization study and is not part of the benchmark relative coverage certificate.

\section{Corrected experiments and audits}
\label{app:experiments}

\paragraph{Artifact policy.}
Only corrected and versioned artifacts from the independent code and artifact audit support the results below. Superseded first pass analyses are excluded from every paper table and claim. The original full paper plan reserved Study E for a larger static evaluation. The workshop paper does not claim a completed Study E.

\subsection{Study A and A4: calibration after discovery}

Study A uses eight Gaussian score blind features, five artifacts with different stress responses, a fixed paired baseline, nineteen bounded candidate rules, and a 50/50 discovery and confirmation split. Each configuration uses 2,000 trials and 2,000 multiplier draws.

\begin{table}[H]
\centering
\scriptsize
\caption{Study A pointwise coverage in percent. Fixed means a fixed oracle rule, and learned means a rule chosen on discovery data.}
\label{tab:app_study_a}
\begin{tabular}{rcccc}
\toprule
$M$ & Fixed, $R=5$ & Learned, $R=5$ & Fixed, $R=20$ & Learned, $R=20$ \\
\midrule
500  & 95.00 & 93.85 & 94.60 & 94.05 \\
1000 & 94.75 & 95.10 & 95.10 & 94.75 \\
2000 & 95.10 & 95.30 & 94.95 & 95.05 \\
4000 & 95.30 & 95.15 & 95.75 & 95.10 \\
\bottomrule
\end{tabular}
\end{table}

\begin{table}[H]
\centering
\scriptsize
\caption{Matched same data comparison and Study A4 signed family results.}
\label{tab:app_study_a4}
\begin{tabular}{lccc}
\toprule
Setting & Split procedure & Same data procedure & Protected family result \\
\midrule
$M=500$ & 94.4\% coverage & 91.3\% coverage & not run \\
$M=1000$ & 93.6\% coverage & 91.6\% coverage & 94.95\% familywise, 0.85\% false fragility \\
$M=2000$ & 94.4\% coverage & 93.3\% coverage & 95.50\% familywise, 0.55\% false fragility \\
\bottomrule
\end{tabular}
\end{table}

At $M=1000$, the paired split minus same data coverage gap is 2.0 percentage points, with one sided lower bound 0.31 percentage points and McNemar $p=0.032$. Fixed rule coverage ranges from 94.60\% to 95.75\%. For $M\ge1000$, learned rule coverage ranges from 94.75\% to 95.30\%, with negligible bias and estimated to empirical standard error ratios from 0.991 to 1.018.

\subsection{Small cluster audit and safeguard}

Study F has only $N=40$ inference clusters, placing it in a regime where the Gaussian multiplier bootstrap may undercover. This subsection quantifies the gap and identifies a conservative alternative. The corrected small sample audit uses the exact Study F estimator and the complete ten slot family.

\begin{table}[H]
\centering
\scriptsize
\caption{Small cluster familywise coverage across four audited null settings (FSV-remedy, post-hoc fresh-seed diagnostic). Values are ranges over the settings.}
\label{tab:app_fsv}
\begin{tabular}{lcc}
\toprule
Inference rule & Familywise coverage & Reading \\
\midrule
Gaussian multiplier maximum & 90.55 to 93.50\% & below the declared threshold \\
HC1 variance inflation & 91.95 to 93.85\% & not adequate \\
Common $t$ scaling & 93.75 to 95.20\% & not uniformly adequate \\
Bonferroni one sided $t$ & 96.40 to 97.55\% & conservative in all four settings \\
\bottomrule
\end{tabular}
\end{table}

\begin{table}[H]
\centering
\scriptsize
\caption{Power for a negative ten point stressed effect under density-feasible designs (FSV-remedy, fresh seeds).}
\label{tab:app_power}
\begin{tabular}{rrrrr}
\toprule
Clusters & Stressed $\ESS$ & Gaussian power & Bonferroni $t$ power & Reading \\
\midrule
40  & 20  & 10.95\% & 3.90\% & low support \\
80  & 20  & 9.70\% & 4.35\% & more clusters do not help at fixed $\ESS$ \\
200 & 40  & 18.90\% & 14.45\% & moderate support \\
200 & 100 & 48.10\% & 40.25\% & support improves power \\
\bottomrule
\end{tabular}
\end{table}

Gaussian familywise coverage falls 1.5--4.5 points below the 95\% target at $N=40$. The nominal 200-cluster/ESS-20 design is excluded because its maximum density ratio is 10.26, exceeding $\kappa=5$. The theorem is asymptotic, while the Bonferroni rule is a tested operating safeguard rather than a theorem consequence.

\subsection{Corrected positive control}

The small cluster audit shows the method is conservative; this subsection shows it is not vacuous by demonstrating that SASST can confirm a real fragility when one exists at sufficient scale. The positive control is a same-seed diagnostic (not fresh validation) using frozen $\tau$-bench features, the Study F estimator, independent confirmation clusters, a planted fragility in the claim direction, and Bonferroni $t$ inference. The conditional confirmation rate is 0.701 with MC95 interval $[0.671, 0.730]$, barely clearing its 0.700 gate.

\begin{table}[H]
\centering
\scriptsize
\caption{Post-audit same-seed positive-control diagnostic.}
\label{tab:app_fpos}
\begin{tabular}{lccc}
\toprule
Gate & Value & Required value & Result \\
\midrule
Null familywise error, $N=40$ & 0.017 & at most 0.075 & pass \\
Null familywise error, $N=200$ & 0.018 & at most 0.075 & pass \\
Target rule discovery, $N=200$ & 0.921 & at least 0.600 & pass \\
Conditional confirmation, $N=200$ & 0.701 & at least 0.700 & pass \\
End to end detection, $N=200$ & 0.646 & at least 0.500 & pass \\
\bottomrule
\end{tabular}
\end{table}

\subsection{Study B and the geometry audit}

Study B compares KMS, MMD, linear max-sliced, no radius, and a restricted linear registry. A corrected Wasserstein routine changes 2.125\% of KMS selections and 0.625\% of max-sliced selections, but it does not change the qualitative result.

\begin{table}[H]
\centering
\scriptsize
\caption{Corrected Study B held-out AUC after KMS replay (mean over 200 trials). Bold marks the best method per mechanism.}
\label{tab:app_study_b}
\begin{tabular}{lccccc}
\toprule
Mechanism & KMS & MMD & Max-sliced & No-radius & Linear-only \\
\midrule
Linear & 0.602 & \textbf{0.631} & 0.623 & 0.597 & 0.594 \\
XOR & 0.639 & \textbf{0.747} & 0.610 & 0.649 & 0.631 \\
Circular & 0.695 & \textbf{0.716} & 0.696 & 0.558 & 0.500 \\
Policy by tool & 0.698 & 0.679 & \textbf{0.707} & 0.700 & 0.576 \\
\bottomrule
\end{tabular}
\end{table}

MMD is strongest on three of four mechanisms; max-sliced wins on policy by tool. No geometry wins in every mechanism. Study B supports nonlinear rule recovery for some mechanisms; it does not show that KMS is universally best or provide a confirmed natural fragility result.

A separate geometry audit uses frozen $\tau$-bench features and a planted short goal mechanism.

\begin{table}[H]
\centering
\scriptsize
\caption{Geometry audit under one frozen absolute radius budget (fixed registry axis, mean over six kernel seeds).}
\label{tab:app_fgeo}
\begin{tabular}{lccc}
\toprule
Geometry & Accepted rules & Correct rule recovery & AUC \\
\midrule
Standardized features & 53.0 & 0.000 & 0.579 \\
Normalized plus structured features & 50.7 & 0.000 & 0.526 \\
PCA plus structured features & 50.7 & 0.000 & 0.522 \\
Structured features only & 52.5 & 0.000 & 0.527 \\
All geometries without radius constraint & 71 & 0.610 & 0.814 \\
\bottomrule
\end{tabular}
\end{table}

The target rule is outside the common 75\% radius budget for every geometry. Median unit bandwidth RBF similarity in the standardized high dimensional space is about $4\times10^{-177}$. The radius and bandwidth must therefore be checked on score blind pilot data.

\subsection{Study C: feasibility and abstention}

Study C verifies that the support gates (density, $\ESS$, split denominators, stability) correctly block infeasible or unstable rules across five synthetic settings.

\begin{table}[H]
\centering
\scriptsize
\caption{Study C operating checks over 500 trials per setting.}
\label{tab:app_study_c}
\begin{tabular}{lccc}
\toprule
Setting & Abstention or failure & Nonabstention & Recorded check \\
\midrule
Severe concentration & 100.0\% & 0.0\% & $\ESS$ and density \\
Density only & 100.0\% & 0.0\% & density, while $\ESS$ passes \\
Split support only & 100.0\% & 0.0\% & split support, while global checks pass \\
Near tied rules & 95.8\% & 4.2\% & stability \\
Stable positive rule & 0.0\% & 100.0\% & none \\
\bottomrule
\end{tabular}
\end{table}

The 4.2\% near tie nonabstention rate is not a false fragility rate. Study A4 measures false fragility separately.

\subsection{Study D: partly synthetic real feature study}

Study D uses 1,001 MMLU-Pro Law items, Qwen3-8B score tensors, 384 dimensional text features, and an injected degradation that flips 70\% of correct answers within a 40-item jurisprudence subgroup.

\begin{table}[H]
\centering
\scriptsize
\caption{Study D result and explanation diagnostics.}
\label{tab:app_study_d}
\begin{tabular}{lc}
\toprule
Quantity & Value \\
\midrule
Weighted confirmation contrast & $-0.0554$ \\
Pointwise 95\% interval & $[-0.0886,-0.0221]$ \\
Unweighted confirmation contrast & $-0.0294$ \\
Stress amplification & $-0.0260$ \\
Recovery AUC & 0.974 \\
Top weighted precision and Jaccard & 0.444 \\
Top weighted recall & 1.000 \\
Inference $\ESS$ & 405.7 \\
Functional stability, 100 resamples & 0.18 \\
Formal status & abstained \\
\bottomrule
\end{tabular}
\end{table}

The corrected radius is 0.003052. The frozen-rule effect is detectable, but functional stability is 0.18, so the formal status remains abstained and the description is not reusable. The high AUC is a ranking result and does not imply exact subgroup recovery.

\subsection{Controller mechanical audit}

Before interpreting Study F and G workflow differences, the routing code must be verified: does ReAct pass everything through, does the verifier block on rejection, and does the confirmation gate respect consent tokens? A fixed deck sends sixteen state changing calls to the three workflows. Oracle verifier signals isolate controller mechanics from model judgment quality.

\begin{table}[H]
\centering
\scriptsize
\caption{Controller replay. The test measures mechanics rather than semantic verifier quality.}
\label{tab:app_fctl}
\begin{tabular}{lcccc}
\toprule
Workflow & Invalid block recall & Valid approval & Confirmed executed & Declined blocked \\
\midrule
ReAct pass through & 0.0\% & 100.0\% & 100.0\% & 0.0\% \\
Planner with oracle verifier & 100.0\% & 100.0\% & 50.0\% & 50.0\% \\
Confirmation gate & 50.0\% & 50.0\% & 100.0\% & 100.0\% \\
\bottomrule
\end{tabular}
\end{table}

All twenty declared mechanical gates and ten boundary and regression tests pass. The prompt contains the exact serialized mutation and full arguments, and malformed verifier output fails closed. This audit applies to the revised \texttt{f2\_v2} controller; Studies F and G used the historical \texttt{study\_f\_v1} controller, whose routing mechanics are consistent but whose confirmation prompt disclosed only the tool name. The audit does not validate informed consent or model based semantic verification.

\subsection{Studies F and G: interactive agent evaluations}

Study F uses Qwen3-8B. Study G uses Qwen2.5-7B-Instruct. Both use the same eighty task IDs, two simulator seeds, three workflows, 40/40 task cluster split, frozen features, three radii, and ten reporting slots. Each study contains 480 episodes.

\begin{table}[H]
\centering
\scriptsize
\caption{Across-model agent summary. Study F full-study rates combine the exact 80-observation discovery and confirmation halves; Study G rates are reported over its full 160 observations per workflow.}
\label{tab:agent_appendix}
\begin{tabular}{lcc}
\toprule
Quantity & Study F, Qwen3-8B & Study G, Qwen2.5-7B-Instruct \\
\midrule
Full-study ReAct success & 18.75\% & 11.9\% \\
Full-study Plan + verifier success & 20.625\% & 11.2\% \\
Full-study confirmation-gate success & 19.375\% & 14.4\% \\
Discovery choice & Plan + verifier & Confirmation gate \\
Plan + verifier minus ReAct on confirmation & 0.000 & +0.0125 \\
Confirmation gate minus ReAct on confirmation & $-0.0250$ & +0.0250 \\
Maximum rule stability & 0.37 & 0.27 \\
Confirmed & 0 & 0 \\
Not-confirmed & 2 & 2 \\
Abstained & 8 & 6 \\
Failed-feasibility & 0 & 2 \\
\bottomrule
\end{tabular}
\end{table}

\begin{table}[H]
\centering
\scriptsize
\caption{Study F simultaneous ten-slot report under the original Gaussian procedure (critical value 2.4632). Bonferroni $t$ (critical 2.7424) widens bands by factor 1.113 and leaves every status unchanged.}
\label{tab:app_study_f_slots}
\resizebox{\textwidth}{!}{%
\begin{tabular}{lrrrrrrl}
\toprule
Slot & Estimate & SE & Simultaneous bound & $\ESS$ & Density & Stability & Status \\
\midrule
Uniform: Plan + verifier $-$ ReAct & 0.0000 & 0.0791 & lower $-0.195$ & 40.0 & 1.00 & 1.00 & not confirmed \\
Uniform: confirmation gate $-$ ReAct & $-0.0250$ & 0.0498 & lower $-0.148$ & 40.0 & 1.00 & 1.00 & not confirmed \\
$r_1=0.01017$: Plan + verifier & $-0.0227$ & 0.0782 & upper 0.170 & 34.6 & 2.73 & 0.05 & abstained \\
$r_1=0.01017$: confirmation gate & $-0.0227$ & 0.0454 & upper 0.089 & 34.6 & 2.73 & 0.05 & abstained \\
$r_2=0.01799$: Plan + verifier & +0.0400 & 0.0836 & upper 0.246 & 31.3 & 2.40 & 0.07 & abstained \\
$r_2=0.01799$: confirmation gate & 0.0000 & 0.0490 & upper 0.121 & 31.3 & 2.40 & 0.07 & abstained \\
$r_3=0.04003$: Plan + verifier & +0.1298 & 0.1398 & upper 0.474 & 20.6 & 2.40 & 0.37 & abstained \\
$r_3=0.04003$: confirmation gate & +0.0178 & 0.0801 & upper 0.215 & 20.6 & 2.40 & 0.37 & abstained \\
Control $r_2$: Plan $-$ gate & +0.0400 & 0.0574 & lower $-0.101$ & 31.3 & 2.40 & 0.07 & abstained \\
Control $r_2$: gate $-$ Plan & $-0.0400$ & 0.0574 & lower $-0.181$ & 31.3 & 2.40 & 0.07 & abstained \\
\bottomrule
\end{tabular}}
\end{table}

Study F's maximum density ratio is 2.73, well within $\kappa=5$. Study G's third radius fails feasibility because $\ESS=18.7$ falls below the predeclared minimum of 20.

\begin{table}[H]
\centering
\scriptsize
\caption{Study G ten-slot report under the preregistered Gaussian simultaneous procedure (critical value 2.347). The Gaussian band is not well calibrated at $N=40$ (Section~\ref{app:experiments}); Bonferroni $t$ widens all intervals without changing any status. NA means unavailable after feasibility failure.}
\label{tab:app_study_g_slots}
\resizebox{\textwidth}{!}{%
\begin{tabular}{lrrrrrrl}
\toprule
Slot & Estimate & SE & Simultaneous bound & $\ESS$ & Density & Stability & Status \\
\midrule
Uniform: Plan + verifier $-$ ReAct & +0.0125 & 0.0414 & lower $-0.085$ & 40.0 & 1.00 & 1.00 & not confirmed \\
Uniform: confirmation gate $-$ ReAct & +0.0250 & 0.0393 & lower $-0.067$ & 40.0 & 1.00 & 1.00 & not confirmed \\
Stress $r_1$: Plan + verifier & +0.0125 & 0.0414 & upper 0.110 & 40.0 & 1.00 & 0.10 & abstained \\
Stress $r_1$: confirmation gate & +0.0250 & 0.0393 & upper 0.117 & 40.0 & 1.00 & 0.10 & abstained \\
Stress $r_2$: Plan + verifier & +0.0145 & 0.0435 & upper 0.117 & 34.8 & 1.16 & 0.27 & abstained \\
Stress $r_2$: confirmation gate & +0.0021 & 0.0411 & upper 0.099 & 34.8 & 1.16 & 0.27 & abstained \\
Stress $r_3$: Plan + verifier & +0.0356 & NA & NA & 18.7 & 2.85 & 0.14 & failed feasibility \\
Stress $r_3$: confirmation gate & +0.0711 & NA & NA & 18.7 & 2.85 & 0.14 & failed feasibility \\
Control $r_2$: Plan $-$ gate & +0.0124 & 0.0384 & lower $-0.078$ & 34.8 & 1.16 & 0.27 & abstained \\
Control $r_2$: gate $-$ Plan & $-0.0124$ & 0.0384 & lower $-0.103$ & 34.8 & 1.16 & 0.27 & abstained \\
\bottomrule
\end{tabular}}
\end{table}

Study G uses the same tasks and is therefore an across model replication of the reporting decision rather than an independent benchmark replication. It selects a different workflow and different stress rules, but it also produces zero confirmed claims.

\begin{table}[H]
\centering
\scriptsize
\caption{Study F descriptive execution counts. The counts are not causal safety effects.}
\label{tab:app_resources}
\begin{tabular}{lrrrr}
\toprule
Workflow & Tokens & Tool calls & Failed calls & Irreversible calls \\
\midrule
ReAct & 19,932,313 & 1,091 & 217 & 269 \\
Plan + verifier & 17,535,902 & 1,058 & 157 & 176 \\
Confirmation gate & 19,728,565 & 1,021 & 200 & 233 \\
\bottomrule
\end{tabular}
\end{table}

Study F used Qwen3-8B for 480 episodes, totaling 57,196,780 model tokens and 18,280 summed episode seconds. Study G used Qwen2.5-7B-Instruct for 480 episodes and required about 2.5 hours of collection at three concurrent workers plus 30 seconds of inference. The recorded execution environment used an NVIDIA RTX 6000 Ada Generation (49,140\,MiB); a separate Study G environment manifest was not preserved.

Study F sensitivity analyses answer different questions. Treating three parser errors as missing leaves the two confirmation contrasts and all statuses unchanged. A stronger balanced deletion lowers stressed $\ESS$ for two slots below the threshold, changing those slots from abstained to failed feasibility while keeping zero confirmed claims.

\paragraph{Post-hoc descriptive decomposition.}
Across all task-seed blocks, 104/160 in Study F and 118/160 in Study G scored zero under all three workflows. These are observed all-workflow-zero blocks, not estimates of a structural floor. In Study G, Plan + verifier reached environment completion in 76/160 episodes versus 118/160 for ReAct. Trace termination codes show that the 84 Plan + verifier non-completions comprise 32 max-step terminations and 52 verifier-rejection terminations, so the completion difference primarily reflects controller-induced stopping. Conditional success among completed episodes was 23.7\% for Plan + verifier and 16.1\% for ReAct. Because completion is affected by workflow assignment, these completed sets are selected post-treatment populations; the difference should not be interpreted as a causal quality improvement.

\subsection{Reproducibility and superseded analyses}

The paper tables use only corrected artifacts. The arithmetic audit verifies the discovery gain, confirmation contrasts, protected bounds, critical values, and status counts. The claim source matrix maps each paper claim to a corrected artifact and records allowed and forbidden wording. An exclusion audit checks that no paper script reads a superseded first pass result.

\begin{table}[H]
\centering
\scriptsize
\caption{Artifact provenance summary.}
\label{tab:app_provenance}
\begin{tabular}{llc}
\toprule
Analysis & Provenance & Seeds \\
\midrule
Study A, A4, B, C, D & Preregistered simulation & dedicated \\
Study F, G & Preregistered interactive & dedicated \\
KMS correction & Corrective overlay (frozen Study F) & same \\
FSV-v2 & Independent calibration audit & fresh \\
FSV-remedy & Post-hoc fresh-seed diagnostic & fresh \\
FGEO-v2 & Corrective geometry audit & fresh \\
FPOS-v2 & Same-seed diagnostic & same \\
FCTL2 & Mechanical audit (f2\_v2 controller) & deterministic \\
\bottomrule
\end{tabular}
\end{table}

A prospective natural follow up would require fresh tasks, a valid outcome aligned with the policy claim, a predeclared small sample inference rule, corrected feature geometry, a nonredundant intervention, and a power plan based on stressed effective sample size. We leave that experiment to later work.

\section{Limitations and broader impact}
\label{app:limitations}

The theorem is asymptotic and assumes bounded density ratios, stable denominators, honest discovery, and plausibly independent clusters. The forty-cluster audit shows that the generic Gaussian family can undercover in a small sample, while the independently audited Bonferroni $t$ safeguard is conservative in the tested settings. The safeguard has been audited in four null settings rather than proved for every finite sample.

The stress rule search is limited by the declared registry, feature map, radius, and support thresholds. A rule outside the radius is outside the audit scope. High-dimensional kernel features can also become nearly orthogonal when the bandwidth is poorly scaled. The agent studies use one benchmark task pool, low-success open models, the same model as actor and user simulator, and forty confirmation task clusters. The second-model study is a model replication on the same tasks rather than an independent benchmark replication.

The intended use is evaluation and release control. A team can use the protocol to distinguish a supported workflow claim from a development-set gain, an unstable segment, or a low-support analysis. Misuse would include treating a benchmark relative result as proof of deployment safety, treating non-confirmation as equivalence, or selecting a new analysis after seeing confirmation outcomes. The paper rejects those interpretations.

\end{document}